\documentclass{article}

\usepackage{times}
\usepackage{graphicx}
\usepackage{subfigure}
\usepackage{natbib}
\usepackage{algorithm}
\usepackage[noend]{algorithmic}
\usepackage{hyperref}

\usepackage{booktabs}

\usepackage[accepted]{icml2019_arxiv}

\icmltitlerunning{Bayes Optimal Early Stopping Policies for Black-Box Optimization}

\usepackage{amsmath,amssymb,amsthm}
\usepackage{url}

\newcommand{\argmax}{\mathrm{argmax}}
\newcommand{\defined}[1]{{\bf #1}}
\newcommand{\ee}{\mbox{ .}}
\newcommand{\E}[1]{\mathbb{E}\left[#1\right]}
\newcommand{\Exp}{\mathbb{E}}

\newcommand{\posintegers}{\mathbb{Z}_+}
\newcommand{\nonnegintegers}{\mathbb{Z}_{\ge 0}}
\renewcommand{\Pr}[1]{\mathbb{P}\left[#1\right]}
\newcommand{\Prob}{\mathbb{P}}
\newcommand{\reals}{\mathbb{R}}
\newcommand{\seq}[1]{\{#1\}}
\newcommand{\set}[1]{\left\{#1\right\}}

\newtheorem{definition}{Definition}
\newtheorem{lemma}{Lemma}
\newtheorem{theorem}{Theorem}

\newenvironment{proofsketch}{%
	\proof}{\endproof}

\DeclareMathAlphabet{\mathpzc}{OT1}{pzc}{m}{it}

\newcommand{\concat}{\frown}
\newcommand{\dist}{\mathcal{D}}
\newcommand{\etts}{c_{\star}}
\newcommand{\feasible}{\mathcal{F}}
\renewcommand{\L}{L}
\newcommand{\observations}{\mathcal{O}}  %
\newcommand{\of}{g}  %
\newcommand{\os}{\mathsf{obs}}  %
\renewcommand{\sp}{q}  %
\newcommand{\pirestart}[1]{\pi_{\mathrm{static}}(#1)}
\newcommand{\seed}{x} %
\newcommand{\seeds}{\feasible}
\newcommand{\success}{\star}  %
\newcommand{\stoppingrules}{\mathcal{T}}
\newcommand{\trace}{\mathsf{trace}}
\newcommand{\tstop}{t_{\mathrm{Stop}}}  %
\newcommand{\T}{\tau}
\newcommand{\tts}{C_{\star}}

\begin{document}

\twocolumn[
\icmltitle{Bayes Optimal Early Stopping Policies \\for Black-Box Optimization}

\begin{icmlauthorlist}
	\icmlauthor{Matthew Streeter}{google}
\end{icmlauthorlist}

\icmlaffiliation{google}{Google Research}

\icmlcorrespondingauthor{Matthew Streeter}{mstreeter@google.com}

\vskip 0.3in
]

\printAffiliationsAndNotice{}

\begin{abstract}
	We derive an optimal policy for adaptively restarting a randomized algorithm, based on observed features of the run-so-far, so as to minimize the expected time required for the algorithm to successfully terminate.
Given a suitable Bayesian prior, this result can be used to select the
optimal black-box optimization algorithm from among a large family
	of algorithms that
	includes random search, Successive Halving, and Hyperband.
On CIFAR-10 and ImageNet hyperparameter tuning problems, the proposed policies offer up to a factor of 13 improvement over random search in terms of expected time to reach a given target accuracy, and up to a factor of 3 improvement over a baseline adaptive policy that terminates a run whenever its accuracy is below-median.
\end{abstract}

\section{Introduction} \label {sec:intro}

Many real-world problems can be effectively solved using black-box
optimization.  Examples include hyperparameter tuning, as well
as design of circuits, antennas, and other structures.
In such problems,
we are given a feasible set $\feasible$, and the goal is to find
a point $x \in \feasible$ that maximizes
an objective function $f: \feasible \rightarrow \reals$, while evaluating
$f$ as few times as possible.

In this work we consider \emph{multi-fidelity} black-box optimization problems
\citep{huang2006sequential}
where, for each point $x \in \feasible$, there is an iterative process that
produces
a sequence of values $\seq{f(x, t)}_{t=1}^T$.  Having
observed $f(x, t)$, we can observe $f(x, t+1)$ by paying a certain
evaluation cost.
For example, in a hyperparameter tuning problem, $f(x, t)$ might be the
validation accuracy obtained after training for $t$ epochs using
hyperparameter vector $x$, and the cost of computing $f(x, t+1)$ having
already computed $f(x, t)$ might be the time required to train for one epoch.
The goal is now to find an $x \in \feasible$ that maximizes $f(x, T)$,
while minimizing total evaluation cost (e.g., total training time).
Our results also apply to the closely-related problem of maximizing
$f(x, t)$ over both $x$ and $t$, a more natural goal in the context of
hyperparameter tuning.
Solving such problems requires addressing the usual challenges associated with
black-box optimization, but also presents the opportunity to reduce cost by
adaptively allocating resources across different values of $x$ based on
observed partial sequences $f(x, 1), f(x, 2), \ldots, f(x, t)$.

Though hyperparameter tuning is perhaps the most common
example of such a problem within machine learning,
the multi-fidelity formulation is also relevant
to more traditional experiment design problems.
For example, in a
circuit design problem, $f(x, 1)$ might be the result of a cheap simulation,
$f(x, 2)$ might be the result of a more expensive one, and $f(x, 3)$ might
be the result of a physical experiment involving the proposed circuit (e.g., see \cite{huang2006sequential}).

In this work, we focus on the resource allocation aspect of multi-fidelity
black-box optimization.  To this end, we assume that points $x \in \feasible$
are sampled from a fixed distribution (which could be uniform or learned),
which in turn induces a distribution over sequences $\seq{f(x, t)}_{t=1}^T$.
We present theoretical results in a Bayesian setting, where the induced distribution
over sequences is given as a prior.
Given the prior, our job is to adaptively determine when to sample new
$x$ values and how to allocate effort among them.  
Experimentally, we show that a simple explore-exploit algorithm can be used
to effectively estimate the prior on-the-fly.

On the surface, the resource allocation aspect of black-box
optimization may seem less interesting than the geometric aspect
(i.e., deciding which $x \in \feasible$ to consider next),
on which most previous work has focused.
However, recent work
has shown that in many cases, a simple resource allocation policy applied to random search can outperform sophisticated Bayesian optimization algorithms \citep{li2017hyperband}.
Thus, even in the restricted setting we consider, improved resource
allocation has significant potential benefit.

The contributions of this paper are twofold.  First, we formulate
an abstract problem in which one may sample sequences
(e.g., accuracy curves) from a known distribution, and observe prefixes
of those sequences by paying a certain cost (e.g., training time).
For this problem, we derive a policy that is optimal in terms of expected time
to reach a success condition (e.g., suitably high accuracy).
This policy has many potential uses beyond the ones already mentioned.
For example, it can be used to adaptively restart a randomized algorithm
(e.g., a SAT solver) based on observed features of the run-so-far,
so as to minimize its expected running time
(e.g., see \cite{gomes1998boosting}).

Second, we show empirically that
this policy can provide order-of-magnitude improvements over random search
and Hyperband on CIFAR-10 and ImageNet
hyperparameter tuning problems, \emph{when provided with an accurate prior}.
Though we do not achieve comparable results without such a prior,
our experiments demonstrate significant headroom which we
hope will motivate future work on this problem.

\section{Related Work}

As a speedup technique for black-box optimization, our work is most closely
related to early stopping methods.  Various methods for early stopping have
been proposed, based on both parametric and non-parametric models
\citep{domhan2015speeding,golovin2017google}.  Recent work on model-free
algorithms such as Successive Halving \citep{jamieson2016non} and Hyperband \citep{li2017hyperband} has shown that algorithms that apply early stopping to random search can be competitive with Bayesian optimization.

Outside of optimization, earlier work demonstrated the potential of restarts
to speed up randomized algorithms such as SAT
solvers \citep{gomes1998boosting}.  In this setting, significant speedups can
be obtained even without adaptivity, using a fixed sequence of restart
thresholds.  The problem of choosing such a sequence has been addressed
in worst-case, online, and average-case settings \citep{luby1993optimal,gagliolo2007learning,streeter2007restart}.
Our work presents adaptive policies that can be applied to the same
problem, offering additional potential speedups.

Finally, our optimal policy is related to
Gittins index policies \citep{gittins1979bandit},
as discussed in \S\ref{sec:gittins}.

\section{Theoretical Results} \label {sec:theory}

We now formalize the resource allocation problem introduced in \S\ref{sec:intro},
define types of policies that can be used to solve it, derive Bayes-optimal
policies, and present algorithms for efficiently computing near-optimal policies.

\subsection{Problem Definition} \label{sec:definitions}

The problem we consider is defined by a tuple $(\seeds, \dist, \observations, \of)$,
where
\begin{itemize}
    \item $\seeds$ is a set of \emph {seeds},
    \item $\dist$ is a probability distribution over $\seeds$,
    \item $\observations$ is a set of possible \emph{observations}, and
    \item $\of: \seeds \times \posintegers \rightarrow \observations$ is an observation function: $\of(\seed, t)$ is what we observe after spending time $t$ on seed $\seed$.
\end{itemize}

We will consider policies that have the ability to sample a seed $\seed \sim \dist$, and to observe $\of(\seed, 1)$ by paying a unit cost.  Once a policy
has already observed $\of(\seed, t)$ for some $\seed$ and $t$, it may observe $\of(\seed, t+1)$ by paying unit
cost.  The goal is to minimize the time required to observe the special symbol
$\success \in \observations$, which indicates that some success condition has
been met.

To simplify the presentation, we have assumed unit observation costs.
However, our results can be readily extended
to costs that depend on $t$ or even on $\seed$, as discussed at the end of \S\ref{sec:theorems}.

For hyperparameter tuning problems, $\seed$ represents a randomly-sampled
hyperparameter vector,
$\of(\seed, t)$ might represent the resulting validation accuracy after
training for $t$ epochs, and $\success$ might
represent validation accuracy above some predetermined threshold.
In the context of speeding
up a randomized SAT solver, $\seed$ represents the seed used for the pseudo-random
number generator, $\of(\seed, t)$ might contain
features based on the solver's internal state after it has run for $t$ time
steps with random seed $\seed$, and $\success$ represents the solver having
terminated successfully.

We consider several types of policies, defined in the next section.
In all cases, executing a policy $\pi$ produces a sequence of
observations, denoted $\trace(\pi)$.  This sequence is random due to
the sampling of seeds from $\dist$, and its distribution is a function of $\pi$.
Let the random variable $\tts(\pi)$ denote
the length of the shortest prefix of $\trace(\pi)$ that contains $\success$.
An optimal policy is one that minimizes the expected cost incurred before observing $\success$:
\[
	\etts(\pi) \equiv \Exp[\tts(\pi)] \ee
\]

\subsubsection{Types of Policies}

We consider multiple types of policies for solving the above problem.

The simplest type of policy is one that repeatedly samples a seed,
then performs a run whose length depends on the observations according
to a fixed adaptive \emph{stopping rule}.

\begin{definition}
A \defined{stopping rule} is a function
$\T: \observations^* \rightarrow \set{0, 1}$ which takes an
observation sequence as input, and returns a boolean indicating whether
to stop making observations.

Executing $\T$ with seed $\seed$ yields the sequence of observations $\os(\T, \seed) \equiv \seq{o_t}_{t=1}^{\tstop}$, where $o_t = g(\seed, t)$, and $\tstop$ is defined by
\[
	\T(o_{1:t}) = \left.
	\begin{cases}
		0 & \text{if } t < \tstop \\
		1 & \text{if } t = \tstop \mbox { .}
	\end{cases}
	\right.
\]
\end{definition}
In the context of hyperparameter tuning, the observations might be accuracy values, and
a possible stopping rule is: \emph{stop if accuracy has not improved in the last 10 time steps}.
We also consider randomized stopping rules, which return a probability rather than a boolean.

\begin{definition}
For any stopping rule $\T$, the \defined{static restart policy}
$\pirestart{\T}$ repeatedly executes $\T$ with independently sampled seeds,
yielding the observation sequence
\[
	\trace(\pirestart{\T}) \equiv \os(\tau, \seed_1) \concat \os(\tau, \seed_2) \concat \ldots
\]
where $\seed_i \sim \dist\ \forall i$, and $\concat$ is the concatenation operator.
\end{definition}

At the opposite extreme, we consider run-switching policies, which have the
ability to suspend and resume individual runs adaptively using an arbitrary
rule.
In the context of hyperparameter tuning, an example might be:
\emph{perform two runs of length $t=10$, each using a random hyperparameter vector, then discard the run with lower accuracy and continue the remaining run indefinitely}.
The recently-developed Hyperband and Successive Halving algorithms can
both be expressed as run-switching policies.

\begin{definition}
A \defined{run-switching policy} $\pi: (\observations^*)^\infty \rightarrow \nonnegintegers$ takes as input an infinite sequence $L$, where $L_i$ is the (possibly empty) sequence of observations for seed $\seed_i$, and
returns the index of the seed to use for the next observation.

Executing $\pi$ yields a random sequence
of observations $\trace(\pi) \equiv \seq{o_t}_{t=1}^\infty$.  Letting $L^t$ denote the input to
$\pi$ on time step $t$, and letting $\seed_i \sim \dist$ be the $i$th sampled seed, $o_t$ and $L^t$ are defined as follows.
\begin{enumerate}
    \item For all $i$, $L^1_i$ is the empty sequence.
    \item $o_t = \of(\seed_i, s + 1)$, where $i = \pi(L^t)$ and $s = |L^t_i|$.
    \item For all $t$, $L^{t+1}_i = L^t_i \concat \langle o_t \rangle$, where $i = \pi(L^t)$,
	    and $L^{t+1}_j = L^t_j$ for $j \neq i$ ($\concat$ denotes concatenation).
\end{enumerate}
\end{definition}

\subsection{Optimal Policies} \label {sec:theorems}

We now derive an optimal run-switching policy.  
Specifically, we will
prove Theorem~\ref{thm:optimal}, which shows that the Bayes-optimal
run-switching
policy is a static restart policy, and that this restart policy repeatedly runs
the stopping rule $\T^*$ that maximizes a certain benefit to cost ratio.

We adopt the following notation.  For any stopping rule $\tau$,
\begin{itemize}
	\item $\sp(\tau) = \Prob_{\seed \sim \dist}[\success \in \os(\tau, \seed)]$ is the probability that a run of $\tau$ succeeds, and
	\item $c(\tau) = \Exp_{\seed \sim \dist}[|\os(\tau, \seed)|]$ is the expected cost of a single run under $\tau$.
\end{itemize}
$\stoppingrules$ is the set of all (possibly randomized) stopping rules.

\begin{theorem} \label {thm:optimal}
The static restart policy $\pi^* \equiv \pirestart{\T^*}$ is an optimal run-switching policy
(i.e., for any run-switching policy $\pi$, $\etts(\pi) \ge \etts(\pi^*)$),
where
\[
	\T^* = \argmax_{\T \in \stoppingrules} \set{\frac {\sp(\tau)} {c(\T)}} \ee
\]
\end{theorem}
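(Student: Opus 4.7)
The plan is to establish the theorem via two inequalities: (i) $\etts(\pirestart{\T^*}) \le c(\T^*)/\sp(\T^*)$, and (ii) $\etts(\pi) \ge c(\T^*)/\sp(\T^*)$ for every run-switching policy $\pi$. Together these imply that $\pirestart{\T^*}$ minimizes $\etts$ over all run-switching policies. For (i), I would first observe that one may assume without loss of generality that the optimal $\T^*$ halts immediately upon observing $\success$: truncating any $\T$ at its first $\success$ preserves $\sp(\T)$ while weakly decreasing $c(\T)$, hence weakly increases the ratio $\sp(\T)/c(\T)$ that $\T^*$ maximizes. Under this convention, $\pirestart{\T^*}$ performs a geometric (parameter $\sp(\T^*)$) number of i.i.d.\ runs whose lengths sum exactly to $\tts(\pirestart{\T^*})$, and Wald's identity yields $\etts(\pirestart{\T^*}) = c(\T^*)/\sp(\T^*)$.

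The substantive step is (ii). Fix any run-switching policy $\pi$ and, for each seed index $i$, let $M_i$ denote the random total number of observations $\pi$ ever makes on $\seed_i$, so that $\tts(\pi) = \sum_{i \ge 1} M_i$. The key observation is that if one conditions on all randomness external to $\seed_i$---namely the other seeds $\{\seed_j : j \ne i\}$ together with any internal randomness of $\pi$---then the decision of whether to make the $(k+1)$-st observation of $\seed_i$ is determined entirely by the first $k$ observations of $\seed_i$, because that decision is made before the corresponding observation occurs. Hence, conditional on this external randomness, the behavior of $\pi$ on $\seed_i$ realizes some (possibly randomized) stopping rule $\T^{(i)} \in \stoppingrules$, with $c(\T^{(i)})$ equal to the conditional expectation of $M_i$ and $\sp(\T^{(i)})$ equal to the conditional probability that $\success$ is ever observed on $\seed_i$.

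By the definition of $\T^*$, $\sp(\T^{(i)}) \le \alpha^{*} \cdot c(\T^{(i)})$ pointwise, where $\alpha^{*} \equiv \sp(\T^*)/c(\T^*)$. Integrating over the external randomness and then summing over $i$, the right-hand side becomes $\alpha^{*} \cdot \E{\tts(\pi)}$, while the left-hand side becomes $\Pr{\tts(\pi) < \infty}$, since the events ``$\success$ is observed on $\seed_i$'' are disjoint across $i$ (because $\pi$ halts at the first $\success$). Thus $\Pr{\tts(\pi) < \infty} \le \alpha^{*} \cdot \E{\tts(\pi)}$. If $\pi$ succeeds almost surely the left-hand side equals $1$ and we obtain $\etts(\pi) \ge 1/\alpha^{*} = c(\T^*)/\sp(\T^*)$; otherwise $\etts(\pi) = \infty$ and the bound is trivial.

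The main obstacle I anticipate is formalizing the ``effective stopping rule'' $\T^{(i)}$ precisely, so that once the external randomness is integrated out, its $\sp$ and $c$ correspond exactly to the marginal success probability on $\seed_i$ and to $\E{M_i}$. This demands a clean measurable separation between $\seed_i \sim \dist$ and everything else, plus a Fubini-style interchange to push the pointwise inequality $\sp(\T^{(i)}) \le \alpha^{*} \cdot c(\T^{(i)})$ under the outer expectation. A minor secondary issue is that the $\argmax$ defining $\T^*$ may fail to be attained, in which case one would replace $\T^*$ by an $\epsilon$-maximizer, deduce $\etts(\pi) \ge (\alpha^{*} - \epsilon)^{-1}$, and let $\epsilon \to 0$.
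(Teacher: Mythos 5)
Your proposal is correct and follows essentially the same route as the paper: the renewal identity $\etts(\pirestart{\T}) = c(\T)/\sp(\T)$ for the upper bound, and for the lower bound a decomposition of an arbitrary run-switching policy into per-seed effective stopping rules $\T^{(i)}$ satisfying $\sum_i c(\T^{(i)}) = \etts(\pi)$ and $\sum_i \sp(\T^{(i)}) = \sp_\pi$, to which the pointwise inequality $\sp(\T) \le r^* c(\T)$ is applied together with the disjointness of the per-seed success events. The only difference is in how the effective stopping rule is formalized: where you condition on the external randomness and then invoke a Fubini-style interchange, the paper avoids that step by directly defining a single randomized rule via $\T_j(o) \equiv \Pr{|\L_j| > |o| \mid o \text{ is a prefix of } \L_j}$, which marginalizes over the external randomness up front so that the truncated run of $\T_j$ has exactly the distribution of $\pi$'s (truncated) run on seed $j$.
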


In the context of hyperparameter tuning,
Theorem~\ref{thm:optimal}
means that once the optimal policy starts a new training run
it will never revisit a previous one,
meaning that it is not necessary to store
multiple checkpoints or resume a previously paused run in order to execute
the policy.  This also means that the optimal run-switching policy is easy to
parallelize, a significant advantage in practice.

\newcommand{\pirs}{\pi_0}
The proof of Theorem~\ref{thm:optimal} consists of two parts.
Letting $r^* = \frac {\sp(\T^*)} {c(\T^*)}$, we first show that the
static restart policy
$\pirs = \pirestart{\T^*}$ has $\etts(\pirs) = \frac {1} {r^*}$.
We then prove a matching lower bound, showing that any run-switching
policy $\pi$ has $\etts(\pi) \ge \frac {1} {r^*}$.

The first part of the proof is a corollary of the following lemma, which gives
the expected time-to-success of any static restart policy.
The proof
mirrors the proof of Lemma 1 of \citet{luby1993optimal}, which considers
non-adaptive stopping rules defined by an integer time limit.

\begin{lemma} \label{lem:expected_time}
For any stopping rule $\T$, the static restart policy $\pi = \pirestart{\T}$
has expected time-to-success $\etts(\pi) = \frac {c(\T)} {\sp(\T)}$.
\end{lemma}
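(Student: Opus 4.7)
The plan is to exploit the i.i.d.\ structure of the runs in $\pirestart{\T}$, reducing $\etts(\pi)$ to an elementary computation involving a geometric random variable; this follows the same strategy as Luby et al.'s Lemma 1 for fixed-cutoff stopping rules, with failure/success of a run now determined by whether the observation sequence contains $\success$ rather than by whether the run terminates within a fixed time.

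First I would set notation. Write $p = \sp(\T)$ and $c = c(\T)$, and for each $i \ge 1$ let $X_i = |\os(\T, \seed_i)|$ be the length of the $i$th run and $Z_i = \mathbf{1}[\success \in \os(\T, \seed_i)]$ its success indicator. Since the seeds $\seed_i$ are drawn independently from $\dist$ and $\os(\T, \cdot)$ is determined by the seed, the pairs $(X_i, Z_i)$ are i.i.d., with $\E{X_1} = c$ and $\Pr{Z_1 = 1} = p$. Let $N = \min\{i : Z_i = 1\}$ be the index of the first successful run; then $N$ is geometric with parameter $p$, giving $\E{N - 1} = (1-p)/p$, and $\tts(\pi) = \sum_{i=1}^N X_i$.

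The main step is to condition on $N$. On the event $\{N = n\}$, by independence across seeds the lengths of runs $1, \dots, n-1$ are i.i.d.\ copies of $X_1$ conditioned on $Z_1 = 0$, while run $n$ is an independent copy of $X_1$ conditioned on $Z_1 = 1$. Writing $c_f = \E{X_1 \mid Z_1 = 0}$ and $c_s = \E{X_1 \mid Z_1 = 1}$, this gives $\E{\tts(\pi) \mid N = n} = (n-1) c_f + c_s$. Taking an outer expectation over $N$ yields $\etts(\pi) = (1-p) c_f / p + c_s$, which collapses to $\etts(\pi) = c/p$ upon applying the law of total expectation $c = (1-p) c_f + p c_s$.

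I do not anticipate a substantive obstacle — the argument is essentially textbook — but care is needed to justify that $\tts(\pi)$ really equals $\sum_{i=1}^N X_i$ rather than the position of $\success$ within run $N$ added to the preceding run lengths. This uses the convention, implicit in the problem setup, that $\T$ stops on observing $\success$, so the $N$th run's length coincides with the time at which $\success$ is first seen. The edge case $p = 0$ is handled by interpreting both sides of the identity as $+\infty$.
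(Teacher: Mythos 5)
Your proof is correct and rests on the same idea as the paper's: exploit the i.i.d.\ structure of the runs and the identity $\E{C_1} = q\,\E{C_1 \mid S} + (1-q)\,\E{C_1 \mid \lnot S}$, the only difference being that you sum over the geometric index $N$ of the first successful run where the paper sets up and solves the one-step renewal recurrence $K = q\,\E{C_1 \mid S} + (1-q)(\E{C_1 \mid \lnot S} + K)$. Your closing caveat about $\tts(\pi)$ coinciding with $\sum_{i=1}^N X_i$ matches an assumption the paper also makes implicitly, so no gap there.
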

\begin{proof}
Let $\seed_1 \sim \dist$ be the seed used for the first run of $\T$,
let $C_1 = |\os(\tau, \seed_1)|$ be the cost of the first
run, and let $S$ be the event that the first run succeeds (i.e.,
$\success \in \os(\tau, \seed_1))$.  The first run succeeds with probability
$q = \sp(\T)$.  Conditioned on the first run failing, the expected
remaining time-to-success is $\etts(\pi)$.
Thus, letting $K = \etts(\pi)$, $K$ satisfies the recurrence
\[
	K = q \E{C_1 | S} + (1-q) (\E{C_1 | \lnot S} + K) \ee
\]
Subtracting $K(1-q)$ from both sides,
\[
	K \cdot q = q \E{C_1 | S} + (1-q) \E{C_1 | \lnot S} = \E{C_1} \ee
\]
Thus, $K = \frac {\E{C_1}} {q} = \frac {c(\T)} {\sp(\T)}$, as claimed.
\end{proof}

Because maximizing $\frac {\sp(\T)} {c(\T)}$ is equivalent to minimizing
$\frac {c(\T)} {\sp(\T)}$, Lemma~\ref{lem:expected_time} immediately implies that
the policy given by Theorem~\ref{thm:optimal} is optimal among
\emph{static restart policies}.
To show that $\pirestart{\T^*}$ is also an optimal \emph{run-switching policy},
we now prove the lower bound: $\etts(\pi) \ge \frac {1} {r^*}$.
This is shown in
Lemma~\ref{lem:lower_bound}, the proof of which requires the following lemma.

\begin{lemma} \label {lem:equiv}
For any run-switching policy $\pi$, there
exists a sequence $\seq{\T_j}$ of (randomized) stopping rules such that $\etts(\pi) = \sum_j c(\T_j)$
and $\sp_\pi = \sum_j \sp(\T_j)$, where $\sp_\pi$ is the probability that $\pi$ succeeds
(i.e., $\success \in \trace(\pi)$).
\end{lemma}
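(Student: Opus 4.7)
\begin{proofsketch}
My plan is to construct, for each seed index $j \in \posintegers$, a randomized stopping rule $\T_j$ that mimics $\pi$'s behavior on seed $j$ with all other seeds marginalized out. Given an input seed $\seed \sim \dist$, $\T_j$ draws $\seed'_k \sim \dist$ independently for every $k \neq j$ and simulates $\pi$ on the seed vector obtained by placing $\seed$ at position $j$ and $\seed'_k$ at every other position. Whenever the simulated $\pi$ queries position $j$, $\T_j$ advances its input pointer and feeds $\pi$ the corresponding observation of $\seed$; whenever $\pi$ queries a position $k \neq j$, the required value is computed internally using $\seed'_k$. $\T_j$ signals stop as soon as the simulation produces $\success$. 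This procedural description can then be recast as a randomized stopping rule $\observations^* \to [0,1]$ by marginalizing over the internal seeds.

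Next I would verify that $c(\T_j)$ and $\sp(\T_j)$ match $\pi$'s per-seed statistics. Let $N_j$ denote the number of times $\pi$, when run on i.i.d.\ seeds from $\dist$, queries seed $j$ within the first $\tts(\pi)$ observations. Since $\seed \sim \dist$ together with the i.i.d.\ $\seed'_k$ yield the same joint law as $\pi$'s own i.i.d.\ seed vector, $|\os(\T_j, \seed)|$ equals $N_j$ in distribution, and the event ``$\success \in \os(\T_j, \seed)$'' coincides with the event that the first $\success$ in $\trace(\pi)$ is observed on seed $j$. Hence $c(\T_j) = \E{N_j}$ and $\sp(\T_j) = \Pr{\text{first } \success \text{ in } \trace(\pi) \text{ lands on seed } j}$. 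Because each of the first $\tts(\pi)$ observations is charged to exactly one seed, $\sum_j N_j = \tts(\pi)$, and Tonelli's theorem gives $\sum_j c(\T_j) = \E{\tts(\pi)} = \etts(\pi)$. The per-seed success events are pairwise disjoint, so $\sum_j \sp(\T_j) = \Pr{\success \in \trace(\pi)} = \sp_\pi$.

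The main obstacle is the measurability step---turning the simulation-based construction into a bona fide randomized stopping rule $\observations^* \to [0,1]$ whose stopping probability, given $o_{1:t}$, is the conditional probability that $\pi$ issues no further query to position $j$ prior to the first $\success$. The remaining wrinkle is the degenerate case where $\pi$ fails to terminate with positive probability: then both $\etts(\pi)$ and $\sum_j c(\T_j)$ are infinite, and the success identity $\sum_j \sp(\T_j) = \sp_\pi < 1$ still holds because the ``missing'' probability mass is exactly the event $\{\tts(\pi) = \infty\}$. Once those details are in place, both identities drop out of the construction and one application of Tonelli.
\end{proofsketch}
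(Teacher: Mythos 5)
Your proposal is correct and takes essentially the same approach as the paper: you define $\T_j$ so that its truncated observation sequence has the same distribution as the observations $\pi$ makes on seed $j$ (your marginalized continuation probability is exactly the paper's $\T_j(o) \equiv \Pr{|\L_j| > |o| \ | \ E^j_o}$), and then sum costs by Tonelli and success probabilities by disjointness. The simulation-with-fresh-seeds framing is just a more operational description of the same construction, and you correctly flag the marginalization step that the paper handles with its ``follows inductively'' remark.
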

\begin{proof}
To define the sequence of stopping rules, suppose we execute $\pi$,
stopping when it succeeds (if ever).
Let $L_j$ be the resulting observation sequence for seed $\seed_j$.
Let $o_j$ be the truncated observation sequence that results from executing $\T_j$.
We will define $\T_j$ in such a way that the random variables $o_j$ and $\L_j$ have exactly the same distribution.

Assuming $o_j$ and $\L_j$ have the same distribution,
\[
c(\T_j) = \E{|o_j|} = \E{|\L_j|} \mbox { .}
\]
Because the cost of running $\pi$ until it succeeds is $\sum_j |\L_j|$, we have $c(\pi) = \E{ \sum_j |\L_j|} = \sum_j \E{|\L_j|} = \sum_j c(\T_j)$.

A similar argument can be used to prove the analogous equation for $\sp$.
Let $S_j$ be the event that $\L_j$ contains the success token $\success$.  Because the
success token can appear at most once in $\L$, the events $\set{S_j}$ are mutually exclusive,
and
\[
  \sp_\pi = \sum_j \Pr{S_j} \mbox { .}
\]
Then, because $\L_j$ and $o_j$ have the same distribution, $\sp(\T_j) = \Pr{S_j}$,
so $\sp_\pi = \sum_j \sp(\T_j)$.

To define $\T_j$ formally, for any observation sequence $o$ let $E^j_o$ be the event that $o$ is a prefix of $\L_j$.
Define
\[
  \T_j(o) \equiv \Pr{ |\L_j| > |o| \ | \ E^j_o} \mbox { .}
\]
It then follows inductively that for any $o$, $\Pr{o_j = o} = \Pr{\L_j = o}$,
so $o_j$ and $\L_j$ have the same distribution.
\end{proof}

\begin{lemma} \label {lem:lower_bound}
Any run-switching policy $\pi$ has $\etts(\pi) \ge \frac {1} {r^*}$.
\end{lemma}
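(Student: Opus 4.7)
The plan is to deduce the lower bound directly from Lemma~\ref{lem:equiv} and the definition of $r^*$, with essentially no additional combinatorial work. Given any run-switching policy $\pi$, I would first invoke Lemma~\ref{lem:equiv} to produce a sequence $\{\T_j\}$ of (randomized) stopping rules with $\etts(\pi) = \sum_j c(\T_j)$ and $\sp_\pi = \sum_j \sp(\T_j)$, where $\sp_\pi$ is the probability that $\pi$ ever observes $\success$.

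Next I would dispense with the degenerate case where $\pi$ fails to succeed almost surely: if $\sp_\pi < 1$ then $\tts(\pi) = \infty$ with positive probability, so $\etts(\pi) = \infty \ge 1/r^*$ trivially. Having reduced to the case $\sp_\pi = 1$, I would apply the definition of $r^*$ termwise. Since $\T^*$ maximizes the benefit-to-cost ratio over $\stoppingrules$, for every $j$ we have $\sp(\T_j) \le r^* \, c(\T_j)$ (which also covers the edge case $c(\T_j) = 0$, where necessarily $\sp(\T_j) = 0$). Summing over $j$ yields
\[
    1 = \sp_\pi = \sum_j \sp(\T_j) \le r^* \sum_j c(\T_j) = r^* \cdot \etts(\pi),
\]
from which $\etts(\pi) \ge 1/r^*$ follows after dividing by $r^*$.

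There is essentially no hard step here: all of the structural content lives in Lemma~\ref{lem:equiv}, which already converts an arbitrary run-switching policy into a disjoint collection of independent stopping rules whose costs and success probabilities add up appropriately. The only subtlety worth flagging explicitly in the write-up is the handling of the $\sp_\pi < 1$ case (to avoid dividing by zero when trying to rescale), and the observation that the inequality $\sp(\T_j) \le r^* c(\T_j)$ is valid uniformly over all $\T_j \in \stoppingrules$ precisely because $r^*$ is defined as the supremum of the ratio over this class.
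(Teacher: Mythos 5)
Your proposal is correct and follows essentially the same route as the paper's proof: invoke Lemma~\ref{lem:equiv} to decompose $\pi$ into stopping rules, apply the termwise bound $\sp(\T_j) \le r^* c(\T_j)$ from the definition of $r^*$, sum, and dispose of the $\sp_\pi < 1$ case by noting $\etts(\pi) = \infty$. The only differences are cosmetic (order of case handling and the explicit remark about $c(\T_j)=0$).
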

\begin{proof}
By Lemma~\ref{lem:equiv}, there exists a sequence $\seq{\T_j}$ of stopping
rules such that $\etts(\pi) = \sum_j c(\T_j) $ and $\sp_\pi = \sum_j \sp(\T_j)$,
where $\sp_\pi$ is the probability that $\pi$ succeeds when run forever.
For any stopping rule $\T$, $\sp(\T) \le r^* c(\T)$.
Thus,
\[
\sp_\pi = \sum_j \sp(\T_j) \le r^* \sum_j c(\T_j) = r^* \etts(\pi) \ee
\]
If $\sp_\pi = 1$, this implies $\etts(\pi) \ge \frac {1} {r^*}$, as required.
If $\sp_\pi < 1$, $\etts(\pi) = \infty$ and the lemma holds trivially.
\end{proof}

The results of this section can be easily generalized to the case where observing
$\of(\seed, t + 1)$ given $\of(\seed, t)$ has a cost that depends on $t$ and $\seed$.
After redefining $\trace(\pi)$ as a sequence of (observation, cost) pairs, and redefining $c$ and
$\etts$ appropriately, the proof of Lemma~\ref{lem:equiv} requires only minor changes,
while the remaining proofs go through as-is.

\subsection{Relationship to Gittins Indices} \label {sec:gittins}

The optimal policy derived in Theorem~\ref{thm:optimal} is in fact
the
\emph{Gittins index policy} for a particular instance of the Bayesian
multi-armed bandit problem.  Establishing this connection shows that, in
addition to minimizing expected time-to-success,
the policy of Theorem~\ref{thm:optimal} maximizes an exponentially-discounted
count of the number of times the success token is observed.

In the Bayesian multi-armed bandit problem, we are given a set of $k$ ``arms",
each of which is a Markov chain with known initial state and transition
probabilities.  At each time step $t$, a policy selects the
index $i_t$ of the arm to pull.  This causes Markov chain $i_t$ to
transition to a new state, and the player receives a corresponding reward $r_t$,
drawn from a known distribution which depends on the current state of arm $i$.
The goal is to maximize the discounted reward, $\sum_t \beta^t r_t$, for
discount factor $\beta$.
The Gittins index theorem \citep{gittins1979bandit} shows that, if each arm $i$
is currently in state $z_i$, the optimal policy selects arm
$\argmax_i \set{G_i(z_i)}$, where $G_i(z)$ is the \emph{Gittins index}
associated
with arm $i$ when it is in state $z$.  To define the Gittins index, let
$\tstop(\T)$ be a random variable equal to the number of steps taken by
stopping rule $\T$.  As discussed by \cite{weber1992gittins}, the
Gittins index can be defined as
\begin{equation} \label {eq:gittins}
	G_i(z) =  \sup_{\T \in \stoppingrules} \set { \frac { \mathbb{E} [\sum_{t=1}^{\tstop(\T)} \beta^t r_t(i, z, \T) ]} {\mathbb{E}[ \sum_{t=1}^{\tstop(\T)} \beta^t ]}  } \mbox { .}
\end{equation}

To relate this to Theorem~\ref{thm:optimal}, suppose we have an infinite
number of arms, where
arm $i$ corresponds to the $i$th sampled seed.
Each arm has
the same Markov chain, which has a state for every
observation sequence that does not include the success token $\success$.
Additionally, there is an absorbing state that is entered once the success
token is observed.  A reward of 1 is obtained
when first entering the absorbing state, and the reward is 0 otherwise.

For $\beta=1$, the denominator of \eqref{eq:gittins} is $c(\T)$ and
the numerator is $\sp(\T)$, so the stopping rule that obtains the
supremum in \eqref{eq:gittins} is the $\T^*$ defined in
Theorem~\ref{thm:optimal}.  With additional work, it can be shown that the
Gittins index policy is equivalent to $\pirestart{\T^*}$.
The Gittins index theorem then shows that, in addition to minimizing expected time-to-success, $\pirestart{\T^*}$ maximizes
discounted cumulative reward when the discount factor is
sufficiently close to 1.

\subsection{Computing an Optimal Policy}

As shown in Theorem~\ref{thm:optimal}, the 
 problem of computing an optimal run-switching
policy can be reduced to the simpler problem of computing
the stopping rule $\T^* = \argmax_{\T \in \stoppingrules} \set {\frac {\sp(\T)} {c(\T)}}$.
We now show that $\T^*$ can be computed efficiently using binary search.

\newcommand{\h}{\Delta}

Let $r^* = \max_{\T \in \stoppingrules} \set { \frac {\sp(\T)} {c(\T)} }$.
Each iteration of the binary search algorithm
will guess a value $r$, and check whether $r < r^*$ by solving
the maximization problem:
\begin {equation} \label{eq:delta}
	\h(r) = \max_{\T \in \stoppingrules} \set { \sp(\T) - r \cdot c(\T) }
\end {equation}
This is sufficient to determine whether $r < r^*$, as shown by the following
lemma.

\begin{lemma} \label{lem:search}
$\h(r) > 0$ if and only if $r < r^*$.
\end{lemma}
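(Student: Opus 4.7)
The plan is to prove the two directions of the biconditional separately, since the statement is essentially an algebraic rephrasing of the definitions of $r^*$ and $\h(r)$, with the only care needed being to handle the degenerate case $c(\T)=0$.

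For the ``if'' direction, suppose $r < r^*$. Let $\T'$ be a stopping rule attaining $r^*$ (or, if the supremum is not attained, take $\T'$ with $\sp(\T')/c(\T') > r$, which exists since $r < r^*$). Then $\sp(\T') > r \cdot c(\T')$, so $\sp(\T') - r\cdot c(\T') > 0$, and hence $\h(r) \ge \sp(\T') - r\cdot c(\T') > 0$.

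For the ``only if'' direction, suppose $\h(r) > 0$. Let $\T$ be a stopping rule witnessing $\sp(\T) - r\cdot c(\T) > 0$. Since $\sp(\T) \le 1$ and $\sp(\T) > r \cdot c(\T)$, we have $c(\T) > 0$ whenever $r > 0$ (and the case $r \le 0$ is immediate from $r^* \ge 0$ together with the observation that any $\T$ with $\sp(\T) > 0$ forces $r^* > 0$, while if no such $\T$ exists then $\h(r) \le 0$ for all $r \ge 0$, contradicting the hypothesis). Dividing the inequality $\sp(\T) > r \cdot c(\T)$ by $c(\T) > 0$ gives $\sp(\T)/c(\T) > r$, so $r^* \ge \sp(\T)/c(\T) > r$.

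The main thing to be careful about is the degenerate case $c(\T) = 0$, which would correspond to a stopping rule that halts before any observation; such a rule has $\sp(\T) = 0$ as well, so it contributes only $0$ to $\h(r)$ and cannot be a witness that $\h(r) > 0$. Other than this bookkeeping, the argument is a direct application of the definitions; no additional machinery is needed.
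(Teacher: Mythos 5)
Your proof is correct and follows essentially the same argument as the paper's one-line proof: $\h(r) > 0$ iff some $\T$ has $\sp(\T) - r\,c(\T) > 0$, which is equivalent to $\sp(\T)/c(\T) > r$, which by definition of $r^*$ happens iff $r < r^*$. Your extra bookkeeping for the degenerate case $c(\T)=0$ (which forces $\sp(\T)=0$ and so cannot witness $\h(r)>0$) is a reasonable refinement that the paper elides, but it does not change the substance of the argument.
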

\begin{proof}
$\h(r) > 0$ iff.\ there exists a stopping rule $\T$ with
$\sp(\T) - r \cdot c(\T) > 0$, or equivalently $\frac {\sp(\T)} {c(\T)} > r$.
By definition, such a rule exists iff.\ $r < r^*$.
\end{proof}

\begin{algorithm}
\begin{algorithmic}
  \caption{FindStoppingRule}
  \label{alg:binary_search}
   \STATE {\bfseries Parameters:} $\epsilon > 0$.
   \STATE Initialize $L \leftarrow 0$, $U \leftarrow 1$.
   \WHILE {$U > (1+\epsilon) L$}
	\STATE Set $r \leftarrow \frac {U + L} {2}$.
	\STATE Set $\delta \leftarrow \Delta(r)$ (see equation \eqref{eq:delta}).
	\STATE If $\delta > 0$ set $L \leftarrow r$, otherwise set $U \leftarrow r$.
    \ENDWHILE
	\STATE Return $\hat \T \equiv \argmax_{\T \in \stoppingrules} \set {\sp(\T) - L \cdot c(\T)}$
\end{algorithmic}
\end{algorithm}

Pseudocode for the binary search algorithm is given in Algorithm~\ref{alg:binary_search}.  Assuming it takes cost at least 1 to make an observation, we have
$r^* \le 1$.  Thus, the inequality $L < r^* \le U$ holds initially.  By
Lemma~\ref{lem:search}, this invariant is maintained whenever
the algorithm updates $L$ or $U$.
This, together with the fact that the algorithm only terminates once $U \le (1 + \epsilon) L$,
can be used to show that the algorithm returns a stopping rule $\hat \tau$
with $\frac {\sp(\hat \T)} {c(\hat \T)} \ge \frac {r^*} {1+\epsilon}$.
Together with Lemmas \ref{lem:expected_time} and \ref{lem:lower_bound},
this implies $\pirestart{\hat \T}$ has expected time-to-success 
within a factor $1+\epsilon$ of optimal.
With additional work, it can be shown that
Algorithm~\ref{alg:binary_search} terminates in $O(\log(\frac {1} {\epsilon r^*}))$ iterations.

Each iteration of binary search requires evaluating $\h(r)$ for some $r$.
The best way of doing this depends on how the Bayesian prior over
observation sequences is represented.  In the typical case of a uniform
distribution over a collection of sequences collected as training data,
$\h(r)$ can be computed in time linear in the total number of observations,
as described in the next section.

\subsubsection{Stopping rules as trees} \label{sec:trees}

Any deterministic stopping rule can be represented as a rooted tree whose edges are labeled
with observations.  Any path through the tree corresponds to a possible observation sequence,
and the tree has a path for every sequence for which the rule returns 0 (i.e., does not stop).
In a hyperparameter tuning problem, the edges might be labeled with discretized accuracy
values, and the rule would continue training as long as the observed accuracy-curve-so-far 
matches some path in the tree.

Using this representation, we can compute $\Delta(r)$ in linear time.

\begin{lemma} \label{lem:computef}
Given a uniform distribution $\dist$ over observation sequences $o_1, o_2, \ldots, o_k$,
	$\h(r)$ can be computed in time $O(n)$
where $n = \sum_{i=1}^k |o_i|$.
\end{lemma}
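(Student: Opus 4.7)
The plan is to reduce the computation of $\h(r)$ to a bottom-up dynamic program on the trie built from the $k$ observed sequences. A preliminary observation is that it suffices to maximize over deterministic stopping rules: any randomized rule is realized by a distribution over deterministic rules, and since $\sp$ and $c$ both decompose as expectations under this mixture, $\sp(\T) - r \cdot c(\T)$ is affine in it, so the supremum in \eqref{eq:delta} is attained deterministically.

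I would build a trie whose root is the empty prefix and whose non-root nodes correspond to the distinct non-empty prefixes of $o_1, \ldots, o_k$; this trie has at most $n + 1$ nodes. For each node $v$, precompute the depth $d_v$, the count $c_v$ of sequences whose path passes through $v$, the count $\ell_v$ of sequences that terminate exactly at $v$, and the indicator $\sigma_v \in \set{0,1}$ that $\success$ has appeared on the root-to-$v$ path. Restricted to the support of $\dist$, a deterministic stopping rule is specified by a rooted subtree $S$ of the trie: a run on $o_i$ traces $o_i$ in the trie and halts at the deepest node of that path that lies in $S$, or when $o_i$ is exhausted.

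Let $h(v)$ be the maximum, over subtrees $S$ supported in the subtree rooted at $v$ and containing $v$, of the total contribution $\sum_i ([\success \text{ observed}] - r \cdot t_i)$ from sequences whose path passes through $v$, where $t_i$ is the stopping time of the truncated run. The recurrence is
\[
    h(v) = \ell_v (\sigma_v - r d_v) + \sum_{u \in \mathrm{children}(v)} \max \set{h(u), \ c_u (\sigma_v - r d_v)} \ee
\]
The first term collects sequences that terminate at $v$, and for each child $u$, the inner $\max$ chooses between extending $S$ into the subtree at $u$ (value $h(u)$) or truncating $S$ at $v$, in which case the $c_u$ sequences through $u$ stop at $v$. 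Then $k \cdot \h(r) = \max\set{0, h(\text{root})}$, with the $0$ realized by the degenerate rule that never observes.

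Each node does $O(1)$ work plus a sum over its children, which telescopes to $O(|\text{trie}|) = O(n)$ total. I expect the main subtlety to be the bookkeeping around $\ell_v$ and $\sigma_v$---in particular, making sure the ``truncate at $v$'' contribution $c_u(\sigma_v - r d_v)$ correctly charges the success indicator and depth at $v$ rather than at $u$, and that the reduction to deterministic rules remains valid when the supremum in \eqref{eq:delta} ranges over $\stoppingrules$ (including rules whose behavior off the support of $\dist$ is arbitrary).
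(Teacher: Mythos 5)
Your proposal is correct and takes essentially the same approach as the paper: the paper's proof sketch likewise represents any stopping rule (on the support of $\dist$) as a subtree of the prefix tree, arranges the objective $\sp(\T) - r\cdot c(\T)$ as an additive quantity over that subtree, and computes the maximum by a bottom-up pass from the leaves in $O(n)$ time. Your write-up is more explicit than the paper's sketch --- in particular the reduction from randomized to deterministic rules via affinity of the objective, and the concrete recurrence with the ``extend vs.\ truncate'' choice per child --- but the underlying argument is the same.
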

\begin{proofsketch}
In terms of its behavior on these $k$ sequences, any stopping rule can be represented as a
subtree of a tree $T$, where $T$ has one root-to-leaf path for each of the $k$ sequences.
The vertices can be assigned weights so that
the quantity $\sp(\T) - r \cdot c(\T)$ equals the sum of the vertex weights.
Computing $\h(r)$ then becomes the problem of computing a
maximum-weight subtree.  This can be done
working backwards from the leaves in $O(n)$ time.
\end{proofsketch}

Theorem~\ref{thm:binary_search} summarizes the results of this section.

\begin{theorem} \label {thm:binary_search}
Given a uniform distribution over observation sequences $o_1, o_2, \ldots, o_k$,
a run-switching policy that is provably within a factor $1+\epsilon$ of optimal can be computed in time
$O(n \log (\frac {1} {\epsilon r^*}))$,
where $n = \sum_{i=1}^k |o_i|$.
\end{theorem}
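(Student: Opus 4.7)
The plan is to combine the correctness of the binary search with the per-iteration cost bound from Lemma~\ref{lem:computef}, and then control the iteration count. I first maintain the invariant $L < r^* \le U$: initially $L=0$ and $U=1 \ge r^*$, and by Lemma~\ref{lem:search} the midpoint test $\h(r) > 0$ distinguishes whether $r < r^*$, so each update to $L$ or $U$ preserves the invariant.

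Next I read off the approximation guarantee of the returned rule $\hat{\T}$. At termination the loop condition fails, giving $U \le (1+\epsilon)L$, and combined with $r^* \le U$ this yields $L \ge r^*/(1+\epsilon)$. Since $L < r^*$, Lemma~\ref{lem:search} gives $\h(L) > 0$, so $\sp(\hat{\T}) - L \cdot c(\hat{\T}) > 0$, i.e.\ $\sp(\hat{\T})/c(\hat{\T}) > L \ge r^*/(1+\epsilon)$. Applying Lemma~\ref{lem:expected_time} to $\pirestart{\hat{\T}}$ then gives $\etts(\pirestart{\hat{\T}}) = c(\hat{\T})/\sp(\hat{\T}) \le (1+\epsilon)/r^*$, while Lemma~\ref{lem:lower_bound} provides the matching lower bound $\etts(\pi) \ge 1/r^*$ for every run-switching policy $\pi$. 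Hence $\pirestart{\hat{\T}}$ is within a factor of $1+\epsilon$ of optimal.

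For the running-time bound, each iteration evaluates $\h(r)$ once, costing $O(n)$ by Lemma~\ref{lem:computef}. Writing $w_t$ for the width $U-L$ after $t$ iterations, $w_t = 2^{-t}$. The loop terminates when $w_t \le \epsilon L$, so it suffices to lower bound $L$ by a constant fraction of $r^*$ and then drive $w_t$ below $\epsilon r^*/2$. I argue that before $L$ ever becomes positive the algorithm only halves $U$, so at the first iteration $t$ with midpoint $r_t = 2^{-t} < r^*$ we have $r_{t-1} = 2^{-(t-1)} \ge r^*$ and therefore the new $L = r_t \ge r^*/2$. Since $L$ only increases thereafter, $L \ge r^*/2$ from that point onward, and the termination condition is reached once $2^{-t} \le \epsilon r^*/2$. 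This yields $t = O(\log(1/(\epsilon r^*)))$ iterations, for a total time of $O(n \log(1/(\epsilon r^*)))$.

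The main obstacle is the iteration-count bound: the termination criterion $U \le (1+\epsilon)L$ involves a ratio rather than an absolute width, so one must track how quickly $L$ separates from zero when $r^*$ is very small. The argument above resolves this by exploiting the monotonicity of the updates (once set, $L$ only increases and $U$ only decreases) and the fact that the first left-update cannot bring $L$ below $r^*/2$, reducing the analysis to a standard geometric-shrinkage bound on the remaining iterations.
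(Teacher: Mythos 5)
Your proposal is correct and follows essentially the same route as the paper: maintain the invariant $L < r^* \le U$ via Lemma~\ref{lem:search}, read off the $(1+\epsilon)$-approximation from the termination condition together with Lemmas \ref{lem:expected_time} and \ref{lem:lower_bound}, and charge $O(n)$ per iteration via Lemma~\ref{lem:computef}. The one place you go beyond the paper is the iteration-count bound, which the paper dismisses as ``additional work''; your observation that the first update to $L$ necessarily sets $L \ge r^*/2$ (since the preceding midpoint was at least $r^*$), after which the width $2^{-t}$ need only drop below $\epsilon r^*/2$, is a correct and complete way to supply that missing detail.
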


\subsection{Approximately Optimal Policies} \label{sec:near_optimal}

As discussed in \S\ref{sec:trees}, an optimal
stopping rule can be represented as a tree whose vertices represent
partial observation sequences.  In order for an optimal policy
computed on training data to generalize well, the statistics
for each vertex must be estimated based on a reasonable number
of observation sequences.  To achieve this, it is necessary to define
the observations appropriately.  For example, instead of using
real-valued validation accuracies as observations, one can use
bucketized accuracies.
We can also prune the tree to enforce a minimum sequence count.

It is also possible to use our approach with a non-uniform prior, such as the
parameteric Bayesian model of \citet{domhan2015speeding}.  To make use
of the algorithm described in Lemma~\ref{lem:computef}, we must
approximate the prior by a uniform prior over a fixed set of observation
sequences, which can be done by drawing a large number of curves from
the prior and discretizing them appropriately.  Because the number of
samples is limited only by computational constraints, as opposed to
available data, the accuracy loss due to discretization
can be made very small.

We can also use Algorithm~\ref{alg:binary_search} to compute stopping rules that are
not expressed as trees.
For example, the probability of
stopping can be based on a logistic regression, using features based on
the observations made so far.  To make use of Algorithm~\ref{alg:binary_search},
we only need to provide a subroutine that computes $\Delta(r)$.  This is
a linear reward-maximization problem that can be approximately solved using
standard reinforcement learning techniques (e.g., policy gradient).

\section{Experiments} \label{sec:experiments}

To demonstrate the benefit of the optimal run-switching policy derived in
\S\ref{sec:theory}, we now evaluate it on two real-world hyperparameter tuning
problems.  For each problem, our experiments are
designed to answer the following questions:
\begin{itemize}
    \item How much benefit do adaptive policies provide over simpler alternatives, such as starting a fresh run every $t$ time steps, for optimally chosen $t$?
    \item How close to optimal can we get when we do not have access to the prior distribution over observation sequences?  In particular, how close to optimal is the performance of model-free algorithms such as Successive Halving and Hyperband?
\end{itemize}

The two benchmark problems involve tuning the hyperparameters of
image classification models for
CIFAR-10 \citep{krizhevsky2009learning} and
ImageNet \citep{russakovsky2015imagenet}.  We use a convolutional
neural network based on LeNet \citep{lecun1998gradient} for CIFAR-10, and we use
Inception-v3 \citep{szegedy2016rethinking} for ImageNet.

Both models use the same set of hyperparameters, which are given
in Table~\ref{tab:hyperparameters}.  Each hyperparameter is sampled
from either a uniform or log-uniform distribution over a certain interval.
The intervals were selected to include the values used in the original
Inception-v3 paper, as well as a range of other plausible values.

For each hyperparameter tuning problem, we sampled $n$ hyperparameter vectors
uniformly at random, and used each one to train for $T$ update cycles, where
$n$ was as large as practically possible, and $T$ was a rough estimate
(based on a few initial runs) of the point at which most runs had achieved
their maximum validation accuracy.  On each update cycle, we train for 1000
gradient descent steps with a mini-batch size of 1024, and then evaluate
validation accuracy on a separate held-out dataset.
We used $n=720$ and $T=100$ for
LeNet trained on CIFAR-10,
and $n=128$ and $T=200$ for Inception-v3 trained on ImageNet.

We recorded the validation accuracy curves produced by each run, and used this
data to simulate executing different
policies.  This approach allows for fast evaluation of new policies once
the initial data has been collected, and also reduces
variance due to the fact that all policies are evaluated on the same data.

To make our results easily reproducible, we have included the accuracy curves
used in our experiments in the supplementary material, along with the code
for our algorithms.

\begin{table*}[h]
  \label{tab:hyperparameters}
	\caption{Hyperparameters for CIFAR-10 and ImageNet Experiments.}
  \centering
	\begin{small}
	\begin{sc}
	\begin{tabular}{lllp{2cm}p{2cm}p{3cm}}
    \toprule
		Parameter & Range & Distribution \\
    \midrule
		Dropout & $[0.01, 1]$ & Uniform \\
		Label smoothing & $[0, 0.25]$ & Uniform \\
		Learning rate (per example) & $[10^{-4}, 10^{-2}]$ & Log-uniform \\
		RMSProp decay & $[0.75, 1]$ & Uniform \\
		RMSProp epsilon & $[1, 10]$ & Log-uniform \\
    \bottomrule
  \end{tabular}
  \end{sc}
  \end{small}
\end{table*}

\subsection{Non-Adaptive Restart Schedules}

Before evaluating the benefit of adaptive restart policies, we first consider
as a baseline the benefit of using a simple restart schedule.
In particular, we consider restarting with a freshly-sampled hyperparameter
vector every $t$ update cycles,
for some fixed $t$.
As shown by \citet{luby1993optimal}, the optimal
restart schedule is of this form.
Figure~\ref{fig:thresholds}
shows the expected time required to reach a given accuracy $a$, for several
different values of $a$, as a function of the restart threshold $t$,
for the CIFAR-10 tuning problem.  The chosen values of $a$
correspond to the 50th, 90th, and 99th percentile accuracies achieved
at the end of a full run.

\begin{figure} [h]
	\begin{center}
  \includegraphics[width=3in]{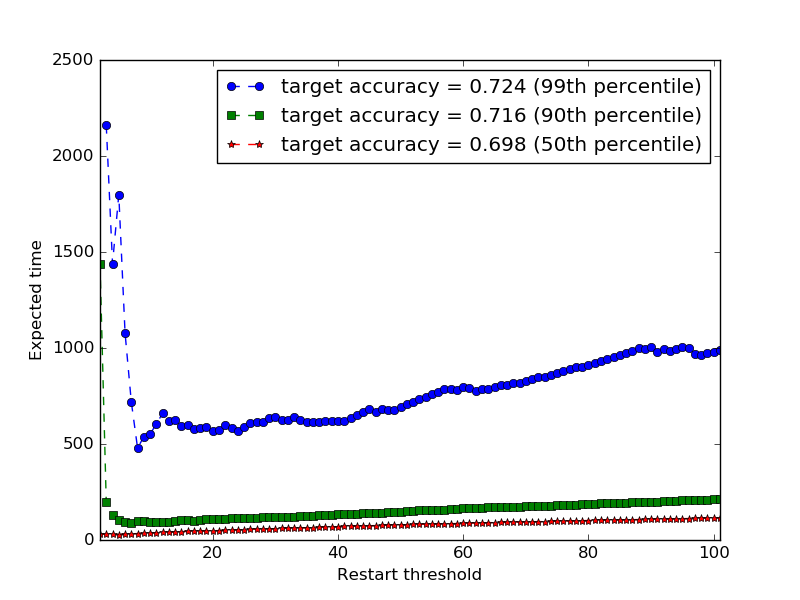}
	\caption{Expected time to reach a given target accuracy
		as a function of restart threshold $t$, when training LeNet
		on CIFAR-10 and restarting training every $t$ update
		cycles using fresh random hyperparameters.}
\label{fig:thresholds}
\end{center}
\end{figure}

As can be seen, choosing $t$ too small can increase the expected time required
to reach a desired accuracy by a large (or even infinite) factor, while
choosing $t$ optimally reduces expected time by a comparatively
small but still non-trivial factor (e.g., roughly a factor of 2 to reach
99th percentile accuracy).

\subsection{Adaptive Run-Switching Policies} \label{sec:adaptive_policies}

We now evaluate the benefit of adaptive run-switching policies
over simple restart schedules.  As discussed in \S\ref{sec:theory},
it suffices to consider restart policies that repeatedly execute a
stopping rule.  We consider stopping rules that, after having
performed a run of length $t$, observe $o_t \in \set{1, 2, \ldots, K}$,
where $o_t$ is the run's current accuracy quantile (relative to other runs of
length $t$ with the same observation prefix), discretized into one of $K$
buckets.  If $K=2$, the stopping
rule makes decisions based on whether accuracy is above or below the
(conditional) median.

\begin{figure} [h]
\begin{center}
  \includegraphics[width=3in]{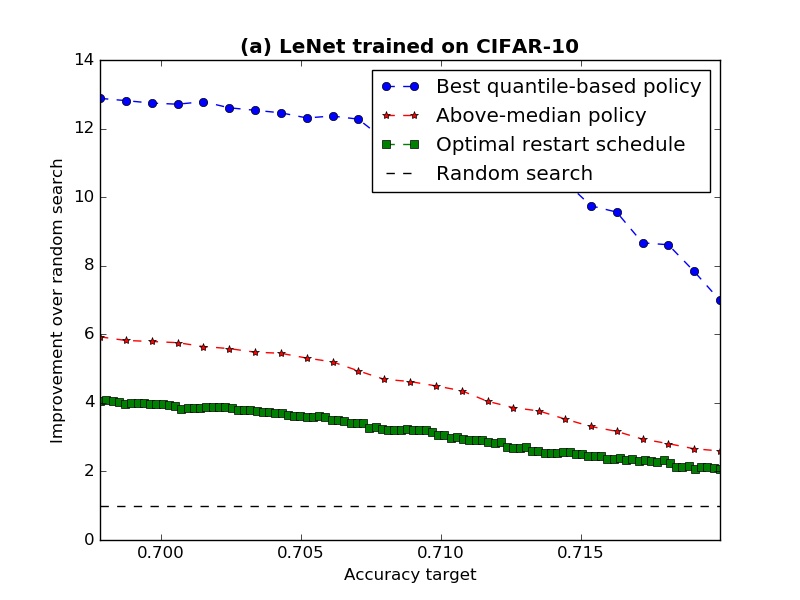}
  \includegraphics[width=3in]{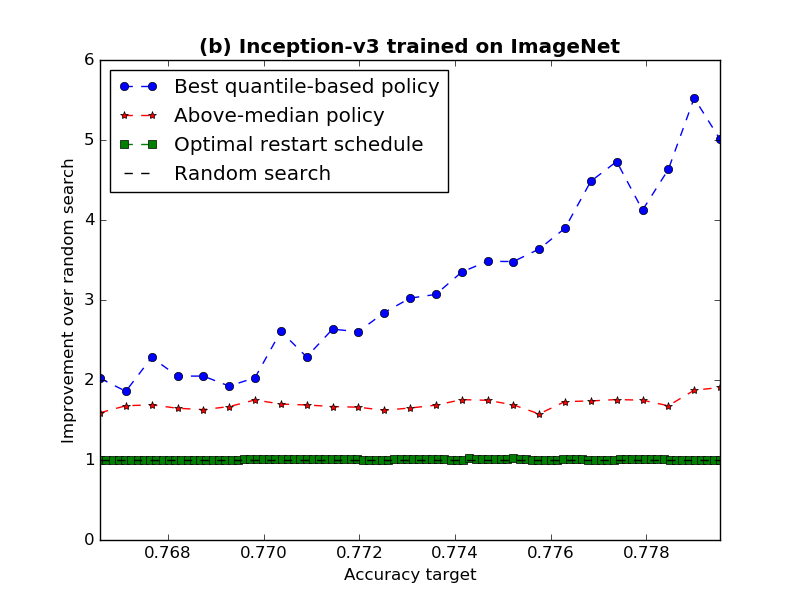}
	\caption{Performance of adaptive run-switching policies, evaluated
	using cross-validation (improvement over random search in
	expected time to reach a given accuracy).}
\label{fig:policies}
\end{center}
\end{figure}

As discussed in \S\ref{sec:near_optimal}, we reduce overfitting by pruning
the policy tree, ensuring that each leaf node is reached by at least 4
runs in the training dataset.  We also considered policies that only branch
when the time so far is a power of 2,
but found this provided no additional benefit over
pruning.

Figure~\ref{fig:policies} shows the improvement over random search that
can be obtained using various schedules and policies.  We plot the
improvement from the best oblivious restart schedule (choosing the best
restart threshold $t$), as well as the improvement from
the best quantile-based policy, optimizing over all $K \in \set{2, 3, 4}$.
As a baseline, we also show the performance of the \emph{above-median stopping rule}, which stops a run at time $t$ if its current accuracy is below the population median.  For all policies that are learned from data, we estimate the
improvement over random search using cross-validation.  To reduce noise
when cross-validating on small dataset, we use a carefully constructed low-variance estimate
described in Appendix A.

As shown in Figure~\ref{fig:policies}, the best quantile-based policy
offers large improvements over random search, and consistently outperforms
both the optimal restart schedule and the above-median policy.  Depending on
the accuracy target, the improvement over random search is up to a factor of
13 for the LeNet model, and up to a factor of 5 for Inception-v3.
In terms of
the expected time to find a
95th-percentile-accuracy hyperparameter vector for Inception-v3,
the best quantile-based policy outperforms the above-median rule by
roughly a factor of 2.5, and outperforms the optimal restart schedule by
more than a factor of 5.

\subsection{Black-Box Optimization Algorithms}

So far we have evaluated the benefit of adaptive policies that were
computed using accuracy curves drawn from the distribution of interest
(and evaluated using cross-validation).
In practice, when facing a new black-box optimization problem we do not know the distribution over accuracy curves, and instead must estimate it on-the-fly.

In this section we compare the performance of four black-box optimization
algorithms.
As baselines, we consider Hyperband \citep{li2017hyperband}, as well as the universal
restart schedule of \citet{luby1993optimal}.  We also consider two new algorithms,
both of which spend half their time collecting data via random search, and the other half exploiting a
policy computed based on that data.  For the \emph{above-median} algorithm, the policy
is the above-median policy described in the previous section.  For the \emph{explore-exploit} algorithm, the policy is the best quantile-based policy,
optimizing over $K \in \set{2,3,4}$ as in \S\ref{sec:adaptive_policies}
and determining the best policy using cross-validation (over the
data collected via random search).  We compute this policy using an
accuracy target equal to the 90th percentile accuracy obtained
during exploration.

To evaluate these algorithms, we ran each algorithm over 4000 times on each
of the two benchmarks, simulating its behavior by
sampling \emph{with replacement} from the collection of pre-recorded
accuracy curves.  Compared with the alternative of actually running
each algorithm on the underlying hyperparameter tuning problem, this approach
allows us to reduce variance by averaging over a much larger number of runs.
It also makes our results easily reproducible given the
accuracy curves, which are included in the supplementary material.

Figure~\ref{fig:algorithms} summarizes the performance of these four algorithms
relative to random
search.
Though Hyperband performs best when tuning the LeNet model
trained on CIFAR-10, it is significantly worse than
random search for tuning Inception-v3 on ImageNet.  In contrast,
the above-median and explore-exploit algorithms
outperform random search on both problems for sufficiently high target accuracies.
As might be expected, both the above-median algorithm and the explore-exploit algorithm perform
better for higher
accuracy targets, where the time available for exploration (and hence the amount of data available
for computing a policy) is larger.

As can be seen by comparing Figures \ref{fig:policies} and \ref{fig:algorithms}, all four algorithms are far from optimal when compared to policies
computed from just a few hundred accuracy curves.
This suggests that substantial gains could be achieved if we could estimate
policies in a more sample-efficient way, for example by using
transfer learning, or by
using policies defined by a function approximator rather than an explicit tree.
The extent to which this is possible is left as an open question for future work.

\begin{figure} [h]
	\begin{center}
  \includegraphics[width=3in]{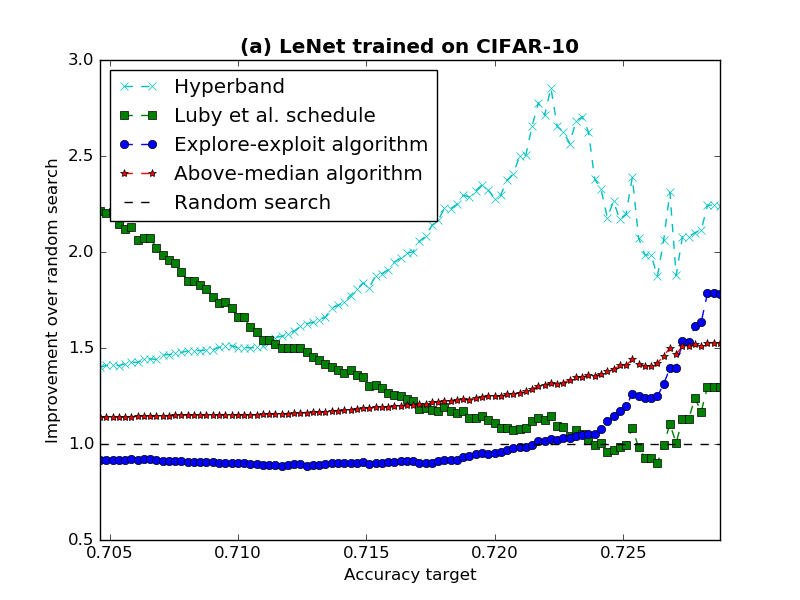}
  \includegraphics[width=3in]{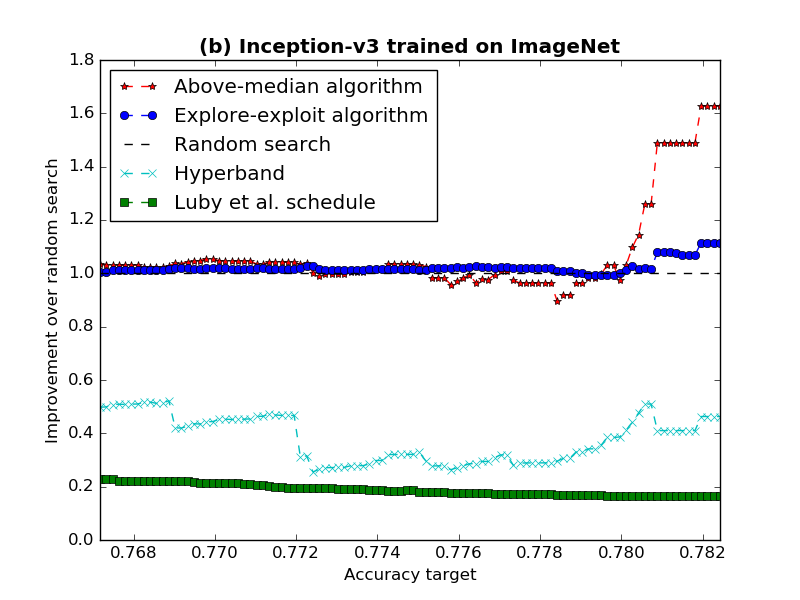}
	\caption{
Performance of black-box optimization algorithms (improvement over random
search in expected time to reach a given accuracy).
		}
\label{fig:algorithms}
\end{center}
\end{figure}

\section{Conclusions}

In this work have have derived optimal early stopping policies applicable
to multi-fidelity black-box optimization, and have evaluated the benefit of
these policies empirically on two hyperparameter tuning problems.
Our main theoretical conclusions are:
\begin{itemize}
	\item {Model-free algorithms for black box optimization, such as Successive Halving and Hyperband, can be viewed as particular \emph{run-switching policies}.}
	\item {The Bayes optimal run-switching policy, in terms of expected time to reach a given accuracy, is a \emph{restart policy} that repeatedly executes the stopping rule that maximizes a certain benefit to cost ratio.  This policy coincides with the Gittins index policy for a related (but different) reward-maximization problem.}
	\item {In contrast to previous early stopping policies,
the optimal policy does not simply stop when it is confident that the current run will not lead to success.  Instead, it stops once it can no longer guarantee a benefit to cost ratio good as that obtained by starting over from scratch.}
\end{itemize}

Empirically, we have found that optimal run-switching policies can offer
order-of-magnitude improvements over random search and
Hyperband, and that
such policies can be estimated using a fairly small number (hundreds)
of observed accuracy curves.  Furthermore, a simple explore-exploit algorithm
based on these policies is already competitive with Hyperband
on our benchmarks, although it fails to deliver the large improvements
over random search that our cross-validation-based analysis shows
are possible given a more accurate prior.

\bibliography{optstop}

\begin{thebibliography}{15}
\providecommand{\natexlab}[1]{#1}
\providecommand{\url}[1]{\texttt{#1}}
\expandafter\ifx\csname urlstyle\endcsname\relax
  \providecommand{\doi}[1]{doi: #1}\else
  \providecommand{\doi}{doi: \begingroup \urlstyle{rm}\Url}\fi

\bibitem[Domhan et~al.(2015)Domhan, Springenberg, and
  Hutter]{domhan2015speeding}
Domhan, T., Springenberg, J.~T., and Hutter, F.
\newblock Speeding up automatic hyperparameter optimization of deep neural
  networks by extrapolation of learning curves.
\newblock In \emph{Proceedings of the Twenty-Fourth International Joint
  Conference on Artificial Intelligence}, 2015.

\bibitem[Gagliolo \& Schmidhuber(2007)Gagliolo and
  Schmidhuber]{gagliolo2007learning}
Gagliolo, M. and Schmidhuber, J.
\newblock Learning restart strategies.
\newblock In \emph{Proceedings of the Twentieth International Joint Conference
  on Artificial Intelligence}, pp.\  792--797, 2007.

\bibitem[Gittins(1979)]{gittins1979bandit}
Gittins, J.~C.
\newblock Bandit processes and dynamic allocation indices.
\newblock \emph{Journal of the Royal Statistical Society. Series B
  (Methodological)}, pp.\  148--177, 1979.

\bibitem[Golovin et~al.(2017)Golovin, Solnik, Moitra, Kochanski, Karro, and
  Sculley]{golovin2017google}
Golovin, D., Solnik, B., Moitra, S., Kochanski, G., Karro, J., and Sculley, D.
\newblock Google {V}izier: A service for black-box optimization.
\newblock In \emph{Proceedings of the 23rd ACM SIGKDD International Conference
  on Knowledge Discovery and Data Mining}, pp.\  1487--1495, 2017.

\bibitem[Gomes et~al.(1998)Gomes, Selman, Kautz, et~al.]{gomes1998boosting}
Gomes, C.~P., Selman, B., Kautz, H., et~al.
\newblock Boosting combinatorial search through randomization.
\newblock \emph{AAAI/IAAI}, 98:\penalty0 431--437, 1998.

\bibitem[Huang et~al.(2006)Huang, Allen, Notz, and Miller]{huang2006sequential}
Huang, D., Allen, T.~T., Notz, W.~I., and Miller, R.~A.
\newblock Sequential kriging optimization using multiple-fidelity evaluations.
\newblock \emph{Structural and Multidisciplinary Optimization}, 32\penalty0
  (5):\penalty0 369--382, 2006.

\bibitem[Jamieson \& Talwalkar(2016)Jamieson and Talwalkar]{jamieson2016non}
Jamieson, K. and Talwalkar, A.
\newblock Non-stochastic best arm identification and hyperparameter
  optimization.
\newblock In \emph{Artificial Intelligence and Statistics}, pp.\  240--248,
  2016.

\bibitem[Krizhevsky \& Hinton(2009)Krizhevsky and
  Hinton]{krizhevsky2009learning}
Krizhevsky, A. and Hinton, G.
\newblock Learning multiple layers of features from tiny images.
\newblock Technical report, University of Toronto, 2009.

\bibitem[LeCun et~al.(1998)LeCun, Bottou, Bengio, and
  Haffner]{lecun1998gradient}
LeCun, Y., Bottou, L., Bengio, Y., and Haffner, P.
\newblock Gradient-based learning applied to document recognition.
\newblock \emph{Proceedings of the IEEE}, 86\penalty0 (11):\penalty0
  2278--2324, 1998.

\bibitem[Li et~al.(2017)Li, Jamieson, DeSalvo, Rostamizadeh, and
  Talwalkar]{li2017hyperband}
Li, L., Jamieson, K., DeSalvo, G., Rostamizadeh, A., and Talwalkar, A.
\newblock Hyperband: A novel bandit-based approach to hyperparameter
  optimization.
\newblock \emph{The Journal of Machine Learning Research}, 18\penalty0
  (1):\penalty0 6765--6816, 2017.

\bibitem[Luby et~al.(1993)Luby, Sinclair, and Zuckerman]{luby1993optimal}
Luby, M., Sinclair, A., and Zuckerman, D.
\newblock Optimal speedup of {L}as {V}egas algorithms.
\newblock \emph{Information Processing Letters}, 47\penalty0 (4):\penalty0
  173--180, 1993.

\bibitem[Russakovsky et~al.(2015)Russakovsky, Deng, Su, Krause, Satheesh, Ma,
  Huang, Karpathy, Khosla, Bernstein, Berg, and
  Fei-Fei]{russakovsky2015imagenet}
Russakovsky, O., Deng, J., Su, H., Krause, J., Satheesh, S., Ma, S., Huang, Z.,
  Karpathy, A., Khosla, A., Bernstein, M., Berg, A.~C., and Fei-Fei, L.
\newblock {ImageNet Large Scale Visual Recognition Challenge}.
\newblock \emph{International Journal of Computer Vision (IJCV)}, 115\penalty0
  (3):\penalty0 211--252, 2015.
\newblock \doi{10.1007/s11263-015-0816-y}.

\bibitem[Streeter et~al.(2007)Streeter, Golovin, and
  Smith]{streeter2007restart}
Streeter, M., Golovin, D., and Smith, S.~F.
\newblock Restart schedules for ensembles of problem instances.
\newblock In \emph{Proceedings of the National Conference on Artificial
  Intelligence}, volume~22, pp.\  1204--1210, 2007.

\bibitem[Szegedy et~al.(2016)Szegedy, Vanhoucke, Ioffe, Shlens, and
  Wojna]{szegedy2016rethinking}
Szegedy, C., Vanhoucke, V., Ioffe, S., Shlens, J., and Wojna, Z.
\newblock Rethinking the {I}nception architecture for computer vision.
\newblock In \emph{Proceedings of the IEEE Conference on Computer Vision and
  Pattern Recognition}, pp.\  2818--2826, 2016.

\bibitem[Weber(1992)]{weber1992gittins}
Weber, R.
\newblock On the {G}ittins index for multiarmed bandits.
\newblock \emph{The Annals of Applied Probability}, 2\penalty0 (4):\penalty0
  1024--1033, 1992.

\end{thebibliography}
\bibliographystyle{icml2019}

\section*{Appendix A}

We now describe the low-variance cross-validation procedure used in our
experiments.

\newcommand{\alg}{\mathcal{A}}
\newcommand{\train}{X}
\newcommand{\test}{Y}

Given an algorithm $\alg$ that produces a static restart policy from training
data,
we evaluate its performance using $k$-fold cross validation, combined
with an important variance-reduction technique which we now describe.

Let $\train_i$ and $\test_i$ denote the training and test
datasets, respectively, for the $i$th split used in cross-validation,
and let $\pi_i = \alg(X_i)$ be the $i$th policy, where $\pi_i = \pirestart{\tau^*_i}$.
Let $\sp_i(\tau^*_i)$ and $c_i(\tau^*_i)$ be the success probability
and
expected cost, respectively, of $\tau^*_i$ as measured on $\test_i$ (both of these depend on the desired target accuracy).
Cross-validation would estimate $\alg$'s expected time-to-success by taking
the average expected time on test data over all splits.  Using Lemma 1, it can be shown that this produces the estimate
\[
	\frac {1} {k} \sum_{i=1}^k \frac {c_i(\tau^*_i)} {\sp_i(\tau^*_i)} \ee
\]

If there are $n$ accuracy curves total, this estimate is asymptotically
unbiased as $\frac n k \rightarrow \infty$.  However, it has high variance
when $k$ is large relative to $n$.
In the extreme case of leave-one-out cross-validation ($k = n$), each $\sp_i(\tau^*_i)$ is estimated based on a single test run, which in general means that at least one $\sp_i(\tau^*_i)$ will be 0, causing the estimate to be infinite independent
of the algorithm $\alg$ that is used to create the policy.

To address this problem, we instead use the estimate
\[
	\frac {\sum_{i=1}^k c_i(\tau^*_i)} {\sum_{i=1}^k \sp_i(\tau^*_i)} \ee
\]
With this estimate, both the numerator and denominator are weighted sums of
$n$ data points, and the estimate has low variance so long as $n$ is large.
Moreover, the bias that remains in our estimate tends
to \emph{understate} the benefit of our adaptive policies (as can be
shown formally using Jensen's inequality).

\end{document}